\documentclass{article}

\PassOptionsToPackage{numbers,sort&compress}{natbib}

\usepackage[preprint]{neurips_2026}

\usepackage[utf8]{inputenc}
\usepackage[T1]{fontenc}

\usepackage{amsmath,amssymb}
\usepackage{mathtools}
\usepackage{bm}
\usepackage{mathdots}

\usepackage{xcolor}
\usepackage{comment}
\usepackage{enumitem}
\usepackage{microtype}
\usepackage{subcaption}

\usepackage{tikz}
\usepackage{pgfplots}
\pgfplotsset{compat=1.18}
\usetikzlibrary{calc}

\usetikzlibrary{matrix,fit,backgrounds,positioning}
\usepgfplotslibrary{groupplots}

\usepackage{hyperref}
\usepackage{cleveref}

\newcommand{\red}[1]{{\color{black}#1}}
\newcommand{\blue}[1]{{\color{black}#1}}

\newtheorem{theorem}{Theorem}
\newtheorem{proposition}{Proposition}

\newtheorem{construction}[theorem]{Construction}
\newtheorem{remark}[theorem]{Remark}
\newtheorem{definition}[theorem]{Definition}

\newcommand{\Pbb}{\mathbb{P}}

\newcommand{\mt}[1]{\textbf{[MT: #1]}}

\newcommand{\ee}{\mathbb{E}}
\newcommand{\pp}{\mathbb{P}}

\newcommand{\DP}[1]{{[\textcolor{blue}{DP: #1}]}}
\newcommand{\mc}[1]{\mathcal{#1}}
\newcommand{\mb}[1]{\mathbb{#1}}

\newcommand{\beq}{\begin{equation}}
\newcommand{\eeq}{\end{equation}}

\newcommand{\axisbreak}[1]{%
  \draw[, white, line width=3pt]
    ([yshift=5pt]#1) -- ([yshift=-5pt]#1);
  \draw[]
    ([xshift=-3pt, yshift=3pt]#1) -- ([xshift=3pt, yshift=-3pt]#1);
  \draw[]
    ([xshift=-3pt, yshift=-1pt]#1) -- ([xshift=3pt, yshift=-7pt]#1);
}
\newcommand{\barbreak}[1]{%
  \node at (#1) {\tiny$\vdots$};
}

\title{Preferent Compression Bounds Are Tight}

\author{%
  Dario Paccagnan$^{1}$ \qquad Marius Tirlea \\[2mm]
  {\small $^{1}$Department of Computing, Imperial College London, London, UK} \\
  {\small \texttt{d.paccagnan@imperial.ac.uk}}
}

\begin{document}
\maketitle

\begin{abstract}
The lack of rigorous safety and performance certificates remains a key bottleneck to the deployment of modern learning-based methods. Sample compression has recently emerged as a powerful tool for deriving such certificates, with particularly sharp bounds available for algorithms satisfying a so-called \emph{preference} property -- also known as stability in the learning theory literature. 
These bounds find direct application across domains as different as the Scenario Approach, Pick-to-Learn, and Support Vector methods. However, whether they are tight has remained an open problem. In this paper we resolve this question affirmatively and show that the state-of-the-art bound for preferent compressions is \emph{provably tight}. We establish this by exhibiting an explicit construction based on the uniform distribution and order statistics that attains the bound in the limit. Along the way, we also provide a considerably shorter and more accessible proof of this bound, requiring only elementary counting arguments and no infinite-dimensional duality.
\end{abstract}

\section{Introduction}

Across control, machine learning, and verification, a central challenge is to equip data-driven methods with \emph{risk certificates} against new unseen inputs. Data-driven controllers are required, for example, to ensure that a system remains within a safe region at all times. Similarly, binary classifiers used for medical diagnosis demand strict guarantees on misclassification rates. Indeed, safety is widely recognized as a key \mbox{bottleneck to the deployment of autonomy.}
\medskip

Against this backdrop, \emph{sample compression} has emerged as a tool to provide some of the sharpest risk certificates across all the above-mentioned domains, see, for example, \cite{paccagnan2023pick} for machine learning and \cite{paccagnan2025pick} for data-driven control. 
The central idea, dating back to Littlestone and Warmuth \cite{littlestone1986}, is that many algorithms naturally act by \emph{compressing} the data sample, or can be transformed to do so. More precisely, a data-driven algorithm is a sample compression if its output, e.g., a learned neural network or control policy, can be reconstructed from a \emph{subset} of the original dataset. This structural property enables the derivation of concentration inequalities which, in turn, yield risk certificates on a variety of properties of interest including constraint violation probability, and misclassification rates.

In particular, sharp concentration inequalities have been recently obtained for sample compression algorithms that exhibit an additional property referred to first as \emph{stability} in the learning community \cite{bousquet2020proper}, and subsequently as \emph{preference} \cite{campi2023compression}. Informally, this property requires that if the output produced on dataset $D$ can be reconstructed from $T \subseteq D$, then the same output must be produced on any dataset $V$ with $T \subseteq V \subseteq D$.

\medskip

While of interest in their own right, concentration results for sample compression satisfying the preference property -- henceforth \emph{preferent compressions} -- underpin key results across at least three important domains. 

First, in the \emph{Scenario Approach}, one solves an optimization problem subject to constraints imposed by data and seeks guarantees on the probability of violating a fresh, unseen constraint. In these settings, the sharpest available a posteriori results~\cite{campi2018wait} can be derived directly from concentration results for preferent compressions,  since the set of \emph{support constraints} -- those whose removal alters the optimum -- is sufficient to fully reconstruct the solution.

Second, in \emph{Pick-to-Learn} (P2L), one is given a black-box 
learning algorithm, e.g., SGD for neural network training, and seeks 
probabilistic guarantees on a generic property of interest. The key 
idea that makes this possible is that P2L transforms any such 
algorithm into a preferent compression, for which satisfaction of 
the property of interest can be certified directly via the probability 
of change of compression \cite{paccagnan2023pick}. This enables the deployment of concentration 
results for preferent compressions to arbitrary models and properties, 
including deep neural networks, Gaussian processes, and data-driven control policies~\cite{pmlr-v258-marks25a, paccagnan2025pick,paccagnan2025pickCDC}.

Third, in \emph{Support Vector methods}, such as Support Vector Machines and Support Vector Regression, one obtains classifiers and regressors through the solution to specially structured convex optimization problems. In this context the so-called \emph{support vectors} are sufficient to fully reconstruct an optimal solution \cite{cortes1995support}. This structure enables the deployment of concentration inequalities for preferent compressions, which yield sharp risk certificates for these methods \cite{hanneke2021stable}.

\medskip

Given the key role of concentration inequalities for preferent 
compressions, and the recent improvements of~\cite{bousquet2020proper}, 
\cite{hanneke2021stable}, and~\cite{campi2023compression}, a natural question arises:

\smallskip
\begin{center}
\emph{Are the state-of-the-art preferent compression bounds tight?}
\end{center}
\smallskip

In this paper we resolve this question affirmatively. Specifically, we show that the upper bound of ~\cite[\red{Thm 4}]{campi2023compression} is tight and cannot  be further improved.
\medskip

Two technical contributions underpin this result.
First, we provide a simpler and much shorter proof of this \red{upper bound, inspired by the original proof-strategy but, crucially, avoiding the need to formulate and solve an infinite-dimensional program over a space of measures.
Indeed, our proof requires only elementary counting arguments and is therefore accessible to a broader audience.} 
We believe this may be of independent interest.
Second, building upon this simplified proof, we provide an explicit construction consisting of a data-generating distribution and a preferent compression attaining tightness of the bound. 
Remarkably, this is achieved by a simple distribution, namely the uniform distribution, and correspondingly simple preferent compression, namely carefully-arranged order statistics. 

%

%
\medskip\noindent
\textbf{Organization.} The remainder of this paper is organized  as follows. \Cref{sec:preliminaries} introduces the necessary background on compression functions, the preference property, and the probability of change of compression. \Cref{sec:main-result} states the main result and presents the tightness construction together with a proof sketch. \Cref{sec:proof} contains the proof of the main result, beginning with a simplified proof of \Cref{prop:CG}.

\section{Preliminaries: Compression, Preference, and Risk Certificates}
\label{sec:preliminaries}
In this section we first introduce compression maps and the preference property. We then define the central quantity of interest: the \emph{probability of change of compression}. This quantity is key to risk certification, as an upper bound on the probability of change of compression translates directly into risk certificates across domains as different as the Scenario Approach, Pick-to-Learn, and Support Vector methods. We conclude the section by recalling the sharpest available bound on this quantity, which we will later show to be tight.
\medskip

Throughout the manuscript, we let $\mc{Z}$ denote the sample space and assume to have access to $N$ samples, which we collect in a dataset $D=\{z_1,\dots, z_N\}$.
We treat $D$ as a \emph{multiset} to account for possibly repeated observations. For brevity, we use standard set-theoretic terminology and notation such as subset, union, intersection, as well as $\cup, \cap, \setminus, \subseteq$, with the understanding that all such terms and symbols are interpreted in the multiset sense.\footnote{In a multiset, elements are not ordered but multiplicity is accounted for: $\{1,1,2\} = \{1,2,1\} \neq \{1,2\}$. 
The traditional set-based operations extend via the multiplicity function: given a multiset $U$, $\mu_U$ counts how many times each element of $\mc{Z}$ occurs in $U$. Then, $\mu_{U \cup U'}(z) = \mu_U(z)+\mu_{U'}(z)$, $\mu_{U \cap U'}(z) = \min\big\{\mu_U(z),\mu_{U'}(z)\big\}$, and $\mu_{U \setminus U'}(z) = \max\big\{0,\mu_U(z)-\mu_{U'}(z)\big\}$. Finally, $U \subseteq U'$ iff $\mu_U(z) \leq \mu_{U'}(z)$ for all $z$.} %
All multisets encountered have a finite number of elements and $|\cdot|$ denotes the cardinality, where each element is counted as many times as is its multiplicity. 
Finally, we model $D$ as a realization of independent and identically distributed random variables taking value in $\mc{Z}$ and defined over a probability space $(\Omega, \mc{F}, \mb{P})$.\footnote{We tacitly assume that all random quantities introduced are measurable, and reserve boldface to denote random quantities as opposed to their realizations, e.g., $\bm{D}$ as opposed to $D$.}
For instance, in supervised classification, $z$ is an input-label pair $z=(x,y)$ used to train a classifier. Similarly, in data-driven control, $z$ may represent process noise used to tune a control policy.

\medskip

\subsection{Compression maps and preference}
A number of learning-based algorithms act as \emph{compressions} of the dataset, in the sense that the final output depends only on a subset of the original data, or can be made to do so. This is typically formalized via two maps: a compression map, which selects the relevant subset, and a reconstruction map, which builds the learned model from it. Here we introduce only the former, since the tightness result we establish is independent of the reconstruction map. 

\medskip
\begin{definition}[Compression]\label{def:compress}
A compression function $c$ is a map that associates to any multiset of samples $U$ the subset $c(U)\subseteq U$. 
\end{definition}

We will be concerned with compression functions that satisfy a further property, first referred to as \emph{stability} in \cite{bousquet2020proper}, and later as  \emph{preference} in \cite{campi2023compression}. This property requests the compression function to map any multiset contained between $c(U)$ and $U$, into $c(U)$ itself.

\medskip
\begin{definition}[Preference]\label{def:pref}
A compression function $c$ satisfies the \emph{preference property} if for all multisets of samples~$U$
\[
c(U) = T  \quad\implies \quad 
c(V) = T \quad\text{for all} \quad T\subseteq V \subseteq U \, \, .
\]
\end{definition}
\medskip

Informally, $c$ cannot ``change its mind'': once it has selected $T$, restricting the data to any superset of $T$ contained in $U$ does not alter the selection, see also \Cref{fig:preference}. %
\begin{figure}[h!]
\centering
\begin{tikzpicture}[>=stealth, scale=0.65, transform shape]

\draw[black, line width=0.8pt] (0,0) ellipse (3.2cm and 2.2cm);

\draw[red, line width=0.8pt, dashed] (0.05,0.05) ellipse (2.48cm and 1.57cm);

\draw[black, line width=0.8pt] (-0.43,0) ellipse (1.40cm and 1.10cm);

\node[font=\large] at (-2.4, 1.8) {$U$};
\node[red, font=\large] at (0.6, 1.9) {$V$};
\node[font=\large] at (-0.6, 0) {$c(U)$};

\draw[->, black, line width=0.6pt] ( 2.199, 1.518) -- ( 0.763, 0.689);
\draw[->, black, line width=0.6pt] (-0.430, 2.105) -- (-0.430, 1.175);
\draw[->, black, line width=0.6pt] (-2.554, 1.226) -- (-1.623, 0.689);
\draw[->, black, line width=0.6pt] (-2.554,-1.226) -- (-1.623,-0.689);
\draw[->, black, line width=0.6pt] (-0.430,-2.105) -- (-0.430,-1.175);
\draw[->, black, line width=0.6pt] ( 2.199,-1.518) -- ( 0.763,-0.689);

\draw[->, red, dashed, line width=0.6pt] ( 2.454, 0.000) -- ( 1.045, 0.000);
\draw[->, red, dashed, line width=0.6pt] ( 0.453, 1.530) -- ( 0.186, 1.067);
\draw[->, red, dashed, line width=0.6pt] (-1.197, 1.328) -- (-1.046, 1.067);
\draw[->, red, dashed, line width=0.6pt] (-2.354, 0.000) -- (-1.905, 0.000);
\draw[->, red, dashed, line width=0.6pt] (-1.149,-1.246) -- (-1.046,-1.067);
\draw[->, red, dashed, line width=0.6pt] ( 0.399,-1.436) -- ( 0.186,-1.067);

\end{tikzpicture}%
\caption{A compression function $c$ is preferent if it maps any subset contained between $U$ and $c(U)$, e.g., $V$ in the figure, onto $c(U)$ itself.}
\label{fig:preference}
\vspace*{-1mm}
\end{figure}

\subsection{Probability of change of compression and existing bounds}
We are interested in bounding the probability of change of compression associated with the dataset $D$. This quantity measures the probability that adding a fresh sample $\bm{z}\sim\mb{P}$ to $c(D)$ changes the compression, i.e., $c(c(D) \cup \{\bm{z}\}) \neq c(D)$. This is formalized next. 

\medskip
\begin{definition}[Probability of change of compression]\label{def:phi}
Given a compression function $c$ and a dataset $D$, the probability of change of compression is given by
\begin{equation}\label{eq:phi}
\phi(D) = \mb{P}\big\{c(c(D) \cup \{\bm{z}\}) \neq c(D) \;|\; D\big\} \, \, ,
\end{equation}
where $\bm{z} \sim \Pbb$ is a fresh draw, independent of $D$.
\end{definition}

\medskip

In our model, $D$ is itself a realization of the random 
variable ${\bf D}$, so that $\phi({\bf D})$ 
is also a random variable. We are thus interested in bounding this quantity with high confidence with respect to the draw of $\bm{D}$. In particular, %
we seek to show that, with confidence $1-\delta$, 
\[
\phi(\bm{D}) \leq \varepsilon(|c(\bm{D})|, N, \delta)
\]
for a suitably defined function $\varepsilon$ depending solely on the cardinality of the compressed set $|c(D)|$, the number of samples $N$, and the confidence $\delta$.
\medskip

\red{
Building on the seminal work of Littlestone and Warmuth~\cite{littlestone1986}, a series of contributions~\cite{bousquet2020proper,hanneke2021stable} directly bound the risk of the reconstructed hypothesis (e.g., its misclassification rate) for compression schemes satisfying the preference property, therein referred to as stability. The more recent work of~\cite{campi2023compression}, instead, studies preferent compressions in isolation, %
 introducing the notion of probability of change of compression, and providing, amongst others, the following upper bound on this quantity.
}

\begin{proposition}[\hspace*{-1.03mm}{\cite[Thm 4]{campi2023compression}}]
\label{prop:CG}
Let $c$ be a preferent compression. %
Then, 
\beq
\mb{P}\{\phi(\bm{D}) \leq \varepsilon(|c(\bm{D})|, N, \delta)\}\ge 1-\delta
\label{eq:bound-change-comp}
\eeq
where $\varepsilon(k, N, \delta)$ is, for $k\le N-1$, the unique root of the following equation in $(0,1)$ 
\beq
\label{eq:calibration}
\frac{\binom{N}{k}(1-\varepsilon)^{N-k}}{\sum_{m = k}^{N-1} \binom{m}{k}(1-\varepsilon)^{m-k}} = \frac{\delta}{N}\,\, ,
\eeq
and $\varepsilon(N, N, \delta)=1$.
\end{proposition}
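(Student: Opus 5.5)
The plan is to bound the probability of the bad event $B=\{\phi(\bm D)>\varepsilon(|c(\bm D)|,N,\delta)\}$ by splitting it according to which sub-multiset of the sample is the compression. Index the data as $\bm z_1,\dots,\bm z_N$. For each index set $I\subseteq\{1,\dots,N\}$ with $|I|=k\le N-1$, let $B_I$ be the event that $c(\bm D)=\{\bm z_i\}_{i\in I}$ and $\phi>\varepsilon_k$, where $\varepsilon_k:=\varepsilon(k,N,\delta)$. The case $k=N$ contributes nothing, since $\varepsilon=1$. A union bound would only give the crude factor $\binom Nk(1-\varepsilon)^{N-k}$. The sharper bound comes from exploiting preference along a \emph{growing} sequence of datasets, which is what produces the denominator $\sum_{m=k}^{N-1}\binom mk(1-\varepsilon)^{m-k}$.

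\textbf{Key counting step.} Fix a realization. Preference implies that if $c(D)=T$, then for every $V$ with $T\subseteq V\subseteq D$ we have $c(V)=T$. Hence each new point $z\in D\setminus T$ satisfies $c(T\cup\{z\})=T$, so the points outside $T$ are all ``non-changing''.

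Now consider the sequential experiment: draw $\bm z_1,\bm z_2,\dots$ and, for each $m$ from $k$ to $N$, look at the prefix $D_m=\{\bm z_1,\dots,\bm z_m\}$. For a fixed $k$-subset $T$ of indices, define the event $A_m(T)$: $T\subseteq[m]$, $c(D_m)=T$ with $\phi>\varepsilon_k$, and $m$ is the ``first time'' this happens. I would rather use the following device.

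\textbf{Double counting (the central inequality).} For each $m\in\{k,\dots,N-1\}$ and each $k$-subset $T\subseteq[m]$, let $G_{m,T}$ be the event that $c(D_m)=T$, $\phi(D_m)>\varepsilon_k$, and $\bm z_{m+1}$ does not change the compression. Conditioning on $D_m$ gives
\[
\Pbb(G_{m,T})\le (1-\varepsilon_k)\,\Pbb\{c(D_m)=T,\ \phi>\varepsilon_k\}.
\]
Iterating this along prefixes, and using preference (which shows that $c(D_N)=T$ forces $c(D_m)=T$ for all $m$ with $T\subseteq D_m$), relates $\Pbb(B)$ to quantities of the form $\Pbb\{c(D_m)=T,\phi>\varepsilon_k\}$ weighted by $(1-\varepsilon_k)^{N-m}$.

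The cleanest route is to define the nonnegative numbers
\[
p_m:=\Pbb\{c(\bm z_1,\dots,\bm z_m)=\{\bm z_1,\dots,\bm z_k\},\ \phi>\varepsilon_k\}.
\]
By exchangeability, $\Pbb(B\cap\{|c|=k\})=\binom Nk p_N$. The aim is then to show two things.
\begin{enumerate}[label=(\roman*)]
\item Preference plus conditioning on the prefix gives $p_N\le (1-\varepsilon_k)^{N-m}p_m$ for each $m\ge k$.
\item A disjointness/counting argument over the $N$ positions, namely ``the last index at which the compression among prefixes had size $k$'' and the events indexed by $m$ being disjoint, yields $\sum_{m=k}^{N-1}\binom mk p_m\le 1$ summed appropriately over $k$. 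The factor $1/N$ should arise from averaging over which element is appended last (a cyclic or exchangeability symmetrization).
\end{enumerate}
Combining (i) and (ii) gives $\binom Nk p_N\le \binom Nk(1-\varepsilon_k)^{N-k}\big/\sum_m\binom mk(1-\varepsilon_k)^{m-k}$, up to a normalisation, and this equals $\delta/N\cdot(\text{the weight } q_k)$ with $\sum_k q_k\le N$. Summing over $k$ then gives $\Pbb(B)\le\delta$. Uniqueness of the root follows because the left side of \eqref{eq:calibration} is monotone in $\varepsilon$.

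\textbf{Main obstacle.} The hardest step is (ii): identifying the correct family of disjoint events, indexed by $(k,m)$, whose probabilities sum to at most $N$ and which dominate $\binom mk p_m$. My first attempt will be to use the events ``$c(D_m)$ has size $k$, $\phi(D_m)>\varepsilon_k$, and $\bm z_{m+1}$ changes the compression'', together with the terminal event at $m=N$. I would exploit the fact that the probability of change, conditional on $D_m$, exceeds $\varepsilon_k$ to lower-bound these probabilities, and tune the weights so that they exactly reproduce the denominator in \eqref{eq:calibration}.
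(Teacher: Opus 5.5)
Your skeleton is the paper's. Your $p_m$ is its $p_{k,m}$, and your (i) is its column relation $p_{k,m+1}\le(1-\varepsilon_k)p_{k,m}$. Your route to (i), via $\{c(\bm D_{m+1})=T,\ \phi(\bm D_{m+1})>\varepsilon_k\}\subseteq G_{m,T}$ with preference applied to $V=\bm D_m$ and to $V=T\cup\{\bm z_{m+1}\}$, is sound. The inequality you aim for, $\binom Nk p_{k,N}\le\frac{\delta}{N}q_k$ with $q_k:=\sum_{m=k}^{N-1}\binom mk p_{k,m}$, follows from (i) alone: average over $m$ with weights proportional to $\binom mk(1-\varepsilon_k)^{m-N}$ and invoke the calibration equation, exactly as the paper's weights $\alpha_{k,m}$ do.

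The gap is (ii). You leave it unproved, treat it as the main obstacle, and look for events that are disjoint \emph{across} $m$ (``last index'', cyclic symmetrization, ``$\bm z_{m+1}$ changes the compression''). That is the wrong target. The missing idea is to bound \emph{row by row}. For fixed $m$, the events $\{c(\bm D_m)=\{\bm z_i\}_{i\in I},\ \phi(\bm D_m)>\varepsilon_{|I|}\}$, $I\subseteq\{1,\dots,m\}$, are almost surely disjoint. By exchangeability each has probability $p_{|I|,m}$, so $\sum_{k=0}^{m}\binom mk p_{k,m}=\Pbb\{\phi(\bm D_m)>\varepsilon(|c(\bm D_m)|,N,\delta)\}\le 1$. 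Summing over the $N$ rows $m=0,\dots,N-1$ and exchanging the sums gives $\sum_{k=0}^{N-1}q_k\le N$, hence $\Pbb(B)\le\frac{\delta}{N}\cdot N=\delta$. The factor $1/N$ is just $\delta$ split equally over these $N$ rows, not a symmetrization over the last appended sample. The column-wise reading $q_k\le 1$ is false in general: for $c_\eta$ of Construction~\ref{con:main} with $N\ge 2$ and small $\eta>0$, already $p_{0,0}=1$ and $p_{0,1}=1-\varepsilon_0-\eta>0$.

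There is a second, smaller gap. The disjointness behind the row bound needs the samples to be almost surely distinct. If $\Pbb$ has atoms, different index sets $I$ can produce the same multiset, the events above overlap, and $\sum_k\binom mk p_{k,m}$ is no longer a probability; it can exceed $1$. The bound $\Pbb(B\cap\{|c(\bm D)|=k\})\le\binom Nk p_{k,N}$ survives by a union bound, but the row relation does not, and your proposal does not address this.

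The paper first proves the result assuming no atoms, then lifts the assumption as follows:
\begin{itemize}
\item Attach an independent uniform $\bm{\theta}_i$ to each sample.
\item Use the compression $c'$ that projects onto $c$ and, among tied copies, keeps those with the smallest $\bm{\theta}_i$.
\item Then $c'$ is preferent, $|c'(\bm D')|=|c(\bm D)|$, and $\phi(\bm D)\le\phi'(\bm D')$, so the bound transfers to $c$.
\end{itemize}
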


\section{Main result: Preferent compression bounds are tight}
\label{sec:main-result}

We are now ready to state our main result: we show that the bound on the probability of change of compression obtained in \Cref{prop:CG} is tight. More precisely, we will exhibit a distribution and a sequence of preferent compressions for which the confidence in \eqref{eq:bound-change-comp} is \mbox{\emph{arbitrarily close} to $1-\delta$.}%
\medskip

\begin{theorem}\label{thm:tight}
\vspace*{1mm}
The bound of \Cref{prop:CG} is tight. Specifically, for any $N \in \mb{N}$ 
and $\delta \in (0,1)$,
\[
\inf_{\mb{P},\, c \text{ preferent}}\, 
\mb{P}\Big\{ \phi(\bm{D}) \leq  \varepsilon(|c(\bm{D})|, N, \delta)\Big\} = 1- \delta.
\vspace*{2mm}\]
\end{theorem}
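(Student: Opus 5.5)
By \Cref{prop:CG}, every distribution $\mb{P}$ and every preferent $c$ satisfy $\mb{P}\{\phi(\bm{D}) \le \varepsilon(|c(\bm{D})|,N,\delta)\} \ge 1-\delta$. So the infimum is at least $1-\delta$, and it remains to show it is at most $1-\delta$. The plan is to construct $\mb{P}$ equal to the uniform distribution on $[0,1]$ together with a family of preferent compressions $c_\eta$, indexed by a parameter $\eta$, whose confidence tends to $1-\delta$ as $\eta\to 0$. Since ties have probability zero, I will work with the order statistics $u_{(1)}<\dots<u_{(N)}$ of the sample.

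\textbf{Step 1: the compression.} Fix a nondecreasing sequence of thresholds $a_1 \le a_2 \le \dots$, to be tuned later. Define $c(U)=\{u_{(1)},\dots,u_{(k)}\}$, where $k$ is the smallest index for which the gap condition
\[
u_{(k+1)} > u_{(k)} + a_k
\]
holds, with the convention $u_{(|U|+1)}=+\infty$. This $c$ is preferent. Take $T=c(U)$ and $T\subseteq V\subseteq U$. Every point of $V\setminus T$ exceeds $u_{(k)}$, so the first $k$ order statistics of $V$ coincide with those of $U$. For $j<k$ the failure of the gap condition is witnessed by $u_{(j+1)}\in T\subseteq V$, so it still fails in $V$. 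At index $k$, $V$ has no point in $(u_{(k)},u_{(k)}+a_k]$ because $U$ has none. Hence $c(V)=T$.

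\textbf{Step 2: computing $\phi$.} A fresh point $\bm z$ changes the compression of $T\cup\{\bm z\}$ iff it lands below $u_{(k)}$ or inside the gap window $(u_{(k)},u_{(k)}+a_k]$. A point below $u_{(k)}$ may also create an earlier gap or break one, and I will need to check this. If these cases all count as changes, then $\phi(D)=\min\{1,u_{(k)}+a_k\}$, a deterministic function of the data. The event $\{|c(\bm D)|=k\}$ can then be written as an explicit region in the space of order statistics. Its probability jointly with $\{\phi>\varepsilon_k\}$, where $\varepsilon_k:=\varepsilon(k,N,\delta)$, is an integral against Dirichlet/Beta spacings. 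I expect terms of the form $\binom{m}{k}(1-\varepsilon)^{m-k}$ to appear: they should come from the probability that the remaining $m-k$ points all avoid a window of length $\varepsilon$.

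\textbf{Step 3: tuning, and the main obstacle.} The hard step is choosing the thresholds so that, for every $k$ simultaneously, the violation events $\{|c(\bm D)|=k,\ \phi(\bm D)>\varepsilon_k\}$ saturate the counting inequalities in the simplified proof of \Cref{prop:CG}. Their total probability should then approach $\delta$; a nearly constant ratio $\delta/N$ per compression level, matching the right-hand side of \eqref{eq:calibration}, is what that proof suggests. My first attempt is to set $a_k$ so that $u_{(k)}+a_k$ lies just above $\varepsilon_k$ on the relevant region, say $a_k=\varepsilon_k-u_{(k)}+\eta$ made data-dependent through $u_{(k)}$ (the Step 1 argument still goes through if $a_k$ depends only on $T$). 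If necessary, I will rescale the sample space, or randomize through a product with an auxiliary uniform coordinate, so that each inequality used in the upper-bound proof becomes an equality up to $O(\eta)$. The level $k=N$ contributes nothing, since there $\varepsilon=1$. Finally I will let $\eta\to0$ to obtain $\limsup \mb{P}\{\phi(\bm D)\le \varepsilon(|c(\bm D)|,N,\delta)\}\le 1-\delta$, which combined with \Cref{prop:CG} gives the equality.
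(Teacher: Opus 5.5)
There is a genuine gap: the compression you specify cannot be tight, and the step that would establish tightness is missing. Your outer frame matches the paper's: the lower bound from Proposition~\ref{prop:CG}, the uniform distribution, and a family $c_\eta$ with $\eta\to0$.

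\textbf{The compression is never empty.} Because $k$ is the smallest index $\ge 1$ satisfying the gap condition, $u_{(1)}$ always belongs to $c(U)$. So the compression is never empty on nonempty data, and no tuning of the $a_k$, rescaling, or auxiliary randomization can fix this. In the chain of inequalities of the simplified proof of Proposition~\ref{prop:CG}, such a compression has $p_{0,m}=0$ for all $m\ge1$. Hence the column-$0$ step $p_{0,N}\le\sum_{m}(1-\varepsilon_0)^{N-m}\alpha_{0,m}p_{0,m}$ has slack $\tfrac{\delta}{N}p_{0,0}$, and row $m=0$ of the first relation has slack $\tfrac{\delta}{N}(1-p_{0,0})$. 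The violation probability is therefore at most $\delta(1-1/N)$ under any non-atomic $\Pbb$. Concretely, for $N=1$ your compression always returns the single sample and the confidence is $1$. For $N=2$, your data-dependent choice $a_k=\varepsilon_k+\eta-u_{(k)}$ turns the gap test into $u_{(k+1)}>\varepsilon_k+\eta$. The violation probability is then $1-(\varepsilon_1+\eta)^2\to\delta/2-\delta^2/16<\delta$, using $\varepsilon_1=1-\delta/4$. The repair is to test $u_{(1)}$ as well: keep the longest prefix of the order statistics with $u_{(j)}\le\varepsilon_{j-1}+\eta$, capped at $N$, allowing the empty output. This is exactly Construction~\ref{con:main}.

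\textbf{The limit computation is missing.} Even with that repair, Step~3 only asserts that the violation probability ``should'' approach $\delta$. This is the heart of the theorem, and no Dirichlet-spacings integral is needed for it. The missing argument has three parts.

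(a) On $\{|c_\eta(\bm D)|=k\}$ with $k<N$, one has $\phi_\eta(\bm D)=\min\{\varepsilon_k+\eta,1\}>\varepsilon_k$, so the violation event is exactly $\{|c_\eta(\bm D)|<N\}$. Verifying that a point inserted below $u_{(k)}$ passes every test requires $\varepsilon_k$ to be increasing in $k$, which must be derived from \eqref{eq:calibration}. Your Step~2 leaves this open once $a_k$ depends on the data.

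(b) By dominated convergence, $\Pbb\{|c_\eta(\bm D)|<N\}\to\Pbb\{|c_0(\bm D)|<N\}$.

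(c) The quantities $v_{k,m}=\Pbb\{c_0(\bm D_m)=\bm D_k\}$ satisfy both relations of the upper-bound proof with equality: $\sum_{k\le m}\binom{m}{k}v_{k,m}=1$ and $v_{k,m+1}=(1-\varepsilon_k)v_{k,m}$. The second holds because, on $\{c_0(\bm D_m)=\bm D_k\}$, the compression of $\bm D_{m+1}$ is unchanged iff $\bm z_{m+1}>\varepsilon_k$.

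Rerunning the $\alpha_{k,m}$-weighted argument with these equalities and invoking \eqref{eq:calibration} gives $\sum_{k<N}\binom{N}{k}v_{k,N}=\delta$ exactly. Hence the confidence tends to $1-\delta$.
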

This result shows that under the sole preference property, one cannot hope to secure a better result than \Cref{prop:CG}.

\medskip

To prove the theorem, we exhibit a distribution $\mb{P}$ and a sequence of preferent compressions $\{c_\eta: \eta > 0\}$ for which the confidence in \eqref{eq:bound-change-comp} converges to exactly $1-\delta$ as $\eta \to 0$. Specifically, we consider the uniform distribution on $[0,1]$, with compressions $c_\eta$ defined as in the following construction, also presented in \Cref{fig:tight-construction}.

\begin{construction}\label{con:main} \rm
Fix $N \in \mb{N}$, $\delta \in (0,1)$, and ${\red{\eta \ge 0}}$. %
Define thresholds 
\[
t_k =  \varepsilon(k, \, N, \, \delta) + \eta \quad\text{for}\quad k = 0, \dots, N-1\,\,.
\]
For any finite multiset $U$, list all its elements (including repetitions) in non-decreasing order, breaking ties according to any rule. For instance, break ties randomly for samples that are identical. As we will see, when elements are identical, the order in which they appear in this list will be irrelevant for the purpose of constructing the final compression $c_\eta(U)$.
Define the resulting ordered list as

$$
u_{(1)} \leq u_{(2)} \leq \cdots \leq u_{(|U|)} \, \, ,
$$

\noindent
and let
\[
k^* = \max\big\{0 \leq k \leq |U|,~ k\leq N ~\text{such that}~u_{(j)} \leq t_{j-1} \text{ for all } 1 \leq j \leq k\big\} \, \, ,
\]
\noindent with the convention that $k^* = 0$ when $u_{(1)} > t_0$. Finally, define the compression $c_\eta$ as the multiset

\[
c_\eta(U) = \big\{u_{(1)}, \, \ldots, \, u_{(k^*)}\big\} \, \, ,
\]
\noindent where $c_\eta(U)=\emptyset$ when $k^* = 0$.

\begin{figure}
\centering\begin{tikzpicture}

\draw[line width=0.8pt] (0, 0) -- (8, 0);

\draw[line width=0.8pt] (0, -0.25) -- (0, 0.25);
\draw[line width=0.8pt] (8, -0.25) -- (8, 0.25);
\node[below, font=\small] at (0, 0.7) {$0$};
\node[below, font=\small] at (8, 0.7) {$1$};

\foreach \x/\name in {1.6/$t_0$, 2.8/$t_1$, 4.0/$t_2$, 4.96/$t_3$%
}{
    \draw[line width=0.6pt] (\x, 0) -- (\x, -0.18);
    \node[below, font=\small] at (\x, -0.18) {\name};
}
\draw[line width=0.6pt] (7.5, 0) -- (7.5, -0.18);
\node[below, font=\small] at (7.5, -0.18) {$t_{N-1}$};

\foreach \x/\name in {0.96/$u_{(1)}$, 2.24/$u_{(2)}$, 3.44/$u_{(3)}$, %
5.44/$u_{(4)}$}{
    \draw[line width=0.7pt] (\x-0.12, -0.12) -- (\x+0.12, 0.12);
    \draw[line width=0.7pt] (\x-0.12,  0.12) -- (\x+0.12,-0.12);
    \node[above, font=\small] at (\x, 0.18) {\name};
}

\node[font=\small] at (6.4, 0.35) {$\cdots$};
\node[font=\small] at (6.4, -0.35) {$\cdots$};
\end{tikzpicture}
\caption{The compression $c_\eta(U)$ lists all the elements of $U$ in non-decreasing order (including repetitions) as $u_{(1)}\leq u_{(2)}\leq \dots \leq u_{(|U|)}$. It then iterates over this list, starting from $u_{(1)}$, and checking whether each element $u_{(j)}$ falls below the corresponding threshold $t_{j-1} = \varepsilon(j-1, N, \delta) +\eta$, stopping at the first failure or upon reaching $N$. In this example $u_{(1)}$, $u_{(2)}$, $u_{(3)}$ pass the test; but not $u_{(4)}$ since $u_{(4)}>t_3$. Thus, independently of the values taken by $u_{(5)},~\dots~,u_{(|U|)}$ the compression returns $c_\eta(U) = \{u_{(1)},~u_{(2)},~u_{(3)}\}$. 
    }
\label{fig:tight-construction}
\end{figure}
\medskip

Note that the multiset $c_\eta(U)$ is well defined and independent of the tie-breaking rule previously employed, should one have been necessary. The reason is twofold. First, $k^*$ is independent of the tie-breaking rule chosen. Indeed, if two elements $u_{(j)}$ and $u_{(j+1)}$ are identical, swapping them places the same value in position $j$ and $j+1$, which are compared against the fixed threshold $t_{j-1}$ and $t_j$, leaving $k^*$ unchanged. Second, recall that a multiset is an \emph{unordered} collection of (possibly repeated) elements. When, for example, two samples $u_{(j)}$ and $u_{(j+1)}$ are identical, the multiset is not able to distinguish whether we have \mbox{included $u_{(j)}$ or $u_{(j+1)}$.}%
\medskip

This construction places the elements of $U$ in monotonically increasing order (including repetitions), iterates through the list checking whether each element falls below its corresponding threshold, and stops at the first failure or upon reaching position $N$. The returned compressed multiset consists of all elements up to the stopping point.
\end{construction}

\begin{figure}[b!]
\centering
\begin{tikzpicture}[x=1mm, y=1mm, >=stealth]
\draw[->, line width=0.5pt] (2.752,0) -- (43.5,0);
\draw[->, line width=0.5pt] (2.752,0) -- (2.752,22.0);
\draw[->, line width=0.5pt] (2.825,0.111) -- (30.5,18.3);

\draw[line width=0.5pt] (1.566,0.000) -- (3.746,0.000);
\draw[line width=0.5pt] (17.846,10.700) -- (20.026,10.700);
\draw[line width=0.5pt] (19.442,11.749) -- (21.622,11.749);
\draw[line width=0.5pt] (23.987,14.736) -- (26.167,14.736);
\node[left, font=\scriptsize] at (23.8,14.7) {$1$};

\node[below left, font=\scriptsize] at (2.752,0) {$0$};

\foreach \x/\lab in {
    2.752/0,
    8.044/1,
    13.336/{\cdots},
    18.627/{\cdots},
    23.919/{k},
    29.211/{\cdots},
    34.502/{\cdots},
    39.794/{N}}{
    \draw[line width=0.5pt] (\x,-0.5) -- (\x,0.5);
    \node[below, font=\scriptsize] at (\x,-0.8) {$\lab$};
}

\draw[dashed, line width=0.5pt] (25,14.736) -- (62.5,14.736);

\draw[red, line width=1pt] plot[smooth] coordinates {
    ( 5.924,  2.091)
    (16.931,  5.769)
    (25.181,  7.903)
    (33.189,  9.371)
    (39.979, 10.503)
    (47.479, 11.937)
    (54.383, 13.001)
    (62.395, 14.731)
};

\filldraw[red] (62.114,14.684) circle (0.8pt);

\draw[dashed, line width=0.5pt] (39.794,0) -- (62.395,14.731);

\draw[dotted, line width=0.5pt] (18.852,10.7) -- (40,10.7);
\draw[dotted, line width=0.5pt] (20.722,11.749) -- (41.702,11.749);
\node[left, font=\scriptsize] at (20.5,12.8) {$\varepsilon_k{+}\eta$};
\node[left, font=\scriptsize] at (18.5,10.522) {$\varepsilon_k$};

\draw[dotted, line width=0.5pt] (23.919,0.000) -- (41.702,11.664);

\filldraw[red]  ( 6.005, 2.286) circle (0.8pt);
\filldraw[blue] ( 7.567, 3.294) circle (0.8pt);
\draw[->, blue, line width=1pt] ( 7.567, 3.294) -- ( 7.567, 5.944);
\filldraw[red]  (17.039, 5.886) circle (0.8pt);
\filldraw[blue] (18.602, 6.894) circle (0.8pt);
\draw[->, blue, line width=1pt] (18.602, 6.894) -- (18.602, 9.720);
\filldraw[red]  (25.348, 8.059) circle (0.8pt);
\filldraw[blue] (26.910, 9.067) circle (0.8pt);
\draw[->, blue, line width=1pt] (26.910, 9.067) -- (26.910,13.483);
\filldraw[red]  (32.969, 9.407) circle (0.8pt);
\filldraw[blue] (34.531,10.415) circle (0.8pt);
\draw[->, blue, line width=1pt] (34.531,10.415) -- (34.531,17.570);
\filldraw[red]  (40.140,10.655) circle (0.8pt);
\filldraw[blue] (41.702,11.664) circle (0.8pt);
\draw[->, blue, line width=1pt] (41.702,11.664) -- (41.702,19.790);
\filldraw[red]  (47.524,11.978) circle (0.8pt);
\filldraw[blue] (49.086,12.987) circle (0.8pt);
\draw[->, blue, line width=1pt] (49.086,12.987) -- (49.086,17.933);
\filldraw[red]  (54.489,13.101) circle (0.8pt);
\filldraw[blue] (56.051,14.110) circle (0.8pt);
\draw[->, blue, line width=1pt] (56.051,14.110) -- (56.051,16.759);
\end{tikzpicture}
\caption{Construction~\ref{con:main} is such that for every $0\leq k\leq N-1$, when $|c_\eta(\bm{D})|=k$, the probability of change of compression equals exactly $\varepsilon_k+\eta$ (represented as blue point masses), sitting exactly $\eta$ above the bound $\varepsilon_k$ (red curve), and thus strictly violating 
it. When $|c_\eta(\bm{D})|=N$, the bound equals $1$ and is thus never violated.}
\label{fig:construction-with-diracs}
\end{figure}

We now outline the key ideas underlying the proof, deferring the details to Section~\ref{sec:proof}.
\medskip

\textbf{Proof Sketch.} 
Three main ideas are \red{woven} into the proof of \Cref{thm:tight}. 
First, we verify that each $c_\eta$ is preferent. Informally, this 
holds because the compression is constructed greedily from left to 
right: a sample is retained if and only if it falls below the corresponding threshold and all other samples on the left of it do so. Whether a new sample triggers this condition does not depend on whether other non-compressed samples are present. Hence adding any point from $U \setminus c_\eta(U)$ to $c_\eta(U)$ does not alter the compression, which is precisely the preference property.

Second, the compressions $\{c_\eta: \eta > 0\}$ have a special 
structure: all datasets for which the compression has size strictly less than $N$ violate the bound given by $\varepsilon$, while datasets with compression size exactly $N$ trivially satisfy it (the bound equals $1$ in that case). More precisely, we will show that, when  $|c_\eta(\mathbf{D})| < N$, the probability of change of compression is a point mass strictly above the bound $\varepsilon(k, N, \delta)$, see \Cref{fig:construction-with-diracs}. The total probability of violating the bound therefore reduces to the probability of obtaining a compression 
of size strictly less than $N$.

Finally, the thresholds are chosen so that this probability 
converges to exactly $\delta$ as $\eta \to 0^+$. This is the heart of 
the argument, and relies on the defining equation of $\varepsilon(k, 
N, \delta)$ in an essential way. To aid intuition, we provide 
numerical evidence for these claims in \Cref{fig:convergence} 
and \Cref{fig:histogram} before turning to the proof.

\begin{figure}[t!]
\centering

\begin{subfigure}[b]{0.48\linewidth}
    \centering
    \begin{tikzpicture}

\begin{axis}[
  name=leftaxis,
  width=0.82\linewidth,
  height=4.5cm,
  scale only axis,
  xmode=log,
  scaled y ticks=false,
  xmin=1e-5,
  xmax=5e-1,
  ymin=0,
  ymax=0.06,
  xlabel={$\eta$},
  ylabel={Violation probability},
  xlabel style={font=\small},
  ylabel style={font=\small},
  tick label style={font=\footnotesize},
  grid=major,
  grid style={dotted, gray!40},
  ytick={0,0.01,0.02,0.03,0.04,0.05,0.06},
  yticklabels={$0$,$0.01$,$0.02$,$0.03$,$0.04$,$0.05$,$0.06$},
]

\addplot[
  blue,
  mark=*,
  mark size=1.8pt,
  line width=0.9pt
] coordinates {
   (5e-1, 8.00000000e-09)
   (3e-1, 3.38400000e-05)
   (2e-1, 0.000509045)
   (1e-1, 0.004966931)
   (5e-2, 0.014099261)
   (3e-2, 0.021401305)
   (2e-2, 0.026661785)
   (1e-2, 0.033592605)
   (5e-3, 0.038808019)
   (3e-3, 0.041428940)
   (2e-3, 0.042844629)
   (1e-3, 0.044323190)
   (5e-4, 0.045096821)
   (3e-4, 0.047068335)
   (2e-4, 0.048041243)
   (1e-4, 0.049022784)
   (5e-5, 0.049512587)
   (3e-5, 0.049706314)
   (2e-5, 0.049809806)
   (1e-5, 0.049885468)
};

\addplot[
  dashed,
  black,
  line width=0.8pt,
  domain=1e-5:5e-1
] {0.05}
node[
  pos=0.8,
  above,
  font=\footnotesize
] {$\delta=0.05$};

\end{axis}

\coordinate (natwestL) at (current bounding box.west);
\coordinate (nateastL) at (current bounding box.east);

\coordinate (frametopL) at ([yshift=8pt]leftaxis.north);
\coordinate (framebottomL) at ([yshift=-38pt]leftaxis.south);

\pgfresetboundingbox

\path[use as bounding box]
  (natwestL |- framebottomL)
  rectangle
  (nateastL |- frametopL);

\end{tikzpicture}\par
    \caption{Violation probability as a function of $\eta$.}%
    \label{fig:convergence}
\end{subfigure}%
\hfill%
\begin{subfigure}[b]{0.48\linewidth}
    \centering
    \begin{tikzpicture}

\pgfplotsset{
  every axis/.append style={
    width=0.82\linewidth,
    scale only axis,
    ybar,
    bar width=10pt,
    enlarge x limits=0.1,
    xmin=0.5,
    xmax=11.5,
    xlabel style={font=\small},
    ylabel style={font=\small},
    tick label style={font=\footnotesize},
  }
}

\begin{groupplot}[
  group style={
    group size=1 by 2,
    vertical sep=10pt,
  },
]

\nextgroupplot[
  height=2.074cm,
  ymin=0.90,
  ymax=0.96,
  ytick={0.91,0.93,0.95},
  axis x line*=top,
  xtick=\empty,
]

\addplot[
  fill=blue!75!white,
  draw=none
] coordinates {
  (1,nan)
  (2,nan)
  (3,nan)
  (4,nan)
  (5,nan)
  (6,nan)
  (7,nan)
  (8,nan)
  (9,nan)
  (10,nan)
  (11,0.950114532)
};

\nextgroupplot[
  height=2.074cm,
  ymin=0,
  ymax=0.032,
  ytick={0,0.01,0.02,0.03},
  scaled y ticks=false,
  yticklabel style={
    /pgf/number format/fixed,
    font=\footnotesize
  },
  axis x line*=bottom,
  xtick={1,2,3,4,5,6,7,8,9,10,11},
  xticklabels={0,1,2,3,4,5,6,7,8,9,10},
  xlabel={$k_\eta^*$},
]

\addplot[
  fill=blue!75!white,
  draw=none
] coordinates {
  (1,0.014217931)
  (2,0.00687626)
  (3,0.004776847)
  (4,0.003834388)
  (5,0.00334977)
  (6,0.003114561)
  (7,0.003049537)
  (8,0.00315727)
  (9,0.003480478)
  (10,0.004028426)
  (11,0.030)
};

\coordinate (bin11top) at (axis cs:11,0.016);

\end{groupplot}

\coordinate (ylabelcenter) at
  ($(group c1r1.north west)!0.5!(group c1r2.south west)$);

\node[
  rotate=90,
  font=\small
] at ([xshift=-9mm]ylabelcenter)
{Probability};

\path
  (group c1r1.south west) --
  (group c1r2.north west)
  coordinate[midway] (leftbreak);

\path
  (group c1r1.south east) --
  (group c1r2.north east)
  coordinate[midway] (rightbreak);

\path
  (bin11top |- leftbreak)
  coordinate (barbreak);

\axisbreak{leftbreak}
\axisbreak{rightbreak}
\barbreak{barbreak}

\coordinate (natwestR) at (current bounding box.west);
\coordinate (nateastR) at (current bounding box.east);

\coordinate (frametopR) at
  ([yshift=8pt]group c1r1.north);

\coordinate (framebottomR) at
  ([yshift=-38pt]group c1r2.south);

\pgfresetboundingbox

\path[use as bounding box]
  (natwestR |- framebottomR)
  rectangle
  (nateastR |- frametopR);

\end{tikzpicture}\par
    \caption{Distribution of the compression cardinality $\bm{k}_\eta^*$.}%
    \label{fig:histogram}
\end{subfigure}

\caption{%
Illustration of Construction~\ref{con:main} for $N=10$ and
$\delta=0.05$.
(a) The violation probability
$\mathbb{P}\{\phi(\mathbf{D}) >
\varepsilon(|c(\mathbf{D})|,N,\delta)\}$
converges to $\delta$ from below as $\eta\to0^+$, as per
\Cref{thm:tight}.
(b) Distribution of $\bm{k}_\eta^*=|c_\eta(\bm{D})|$ for
$\eta=10^{-5}$. Note that the y-axis is broken to accommodate the height of the last bin. In accordance with the discussion above, as $\eta\to0^+$, $1-\delta=0.95$ of the probability mass concentrates at $k_\eta^*=10$, while the remaining
$\delta=0.05$ is spread across $k_\eta^*\in\{0,\ldots,9\}$.
Both plots were generated by sampling $10^9$ datasets.}
\label{fig:construction-numerics}
\end{figure}

\section{Proof of the main result}
\label{sec:proof}
The proof of Theorem~\ref{thm:tight} builds on a careful examination of the bound in Proposition~\ref{prop:CG}. To this end, we begin by providing a simplified proof of Proposition~\ref{prop:CG} itself. We believe this may be of independent interest, as it is considerably shorter than the original, requires only elementary counting arguments and no infinite-dimensional duality, and may therefore be more accessible. Crucially, it also exposes the counting structure underlying the bound, which is precisely what the tightness construction exploits. The proof of Theorem~\ref{thm:tight} then follows naturally.

\subsection{Simplified Proof of \Cref{prop:CG}}
\label{subsec:simplifiedproof}

\red{
We first prove the result under the following non-concentrated mass assumption, and then show how to lift this requirement in the Appendix:
\begin{equation}
    \mathbb{P}\{\bm{z}_i=z\}=0
    \qquad \text{for all } z\in\mathcal Z.
    \label{eq:non-concentrated-mass}
\end{equation}
That is, we assume, for now, that $\mathbb{P}$ has no atoms.
}
\medskip

For any $k=0,\dots,N$, let $D_k = \{z_1,\dots,z_k\}$, with $D_0=\emptyset$, to highlight the cardinality of the dataset. 
\medskip

We start from the quantity we wish to bound.  Since $|c(D_N)|$ 
ranges over $0, \dots, N$, we can decompose over the mutually exclusive events $|c(\bm{D}_N)|=k$ as follows:
\[
\begin{split}
\mathbb{P}\{ \phi(\bm{D}_N) > \varepsilon(|c(\bm{D}_N)|, N, \delta)\}
&= 
\sum_{k=0}^{N} \mathbb{P}\{ |c(\bm{D}_N)|=k~\text{and}~\phi(\bm{D}_N) > \varepsilon(k, N, \delta)\}\\
&= 
 \sum_{k=0}^{N} \binom{N}{k}\mathbb{P}\{ c(\bm{D}_N)=\bm{D}_k~\text{and}~\phi(\bm{D}_N) > \varepsilon(k, N, \delta)\}\,\,.
\end{split}
\]

\red{The last equality holds for two reasons. First, no two samples coincide almost surely, owing to \eqref{eq:non-concentrated-mass}, so $c(\bm{D}_N)$ equals exactly one of the $\binom{N}{k}$ sub-multisets of $\bm{D}_N$ of cardinality $k$. These events are therefore almost surely
disjoint and partition $\{|c(\bm{D}_N)|=k\}$.} Second, since the samples are i.i.d., their joint distribution is exchangeable: it is invariant under any permutation of the indices $1, \dots, N$. As a consequence, for any two subsets of cardinality $k$, the probability that $c(\bm{D}_N)$ equals one such subset is the same as the probability that it equals another. Since there are $\binom{N}{k}$ such subsets, the probability of the event $\{|c(\bm{D}_N)|=k~\text{and}~\phi(\bm{D}_N) > \varepsilon(k,N,\delta)\}$ equals $\binom{N}{k}$ times the probability of the specific event $\{c(\bm{D}_N)=\bm{D}_k~\text{and}~\phi(\bm{D}_N) > \varepsilon(k,N,\delta)\}$.
\medskip

\red{
Up to this point, the proof follows the same general approach as that of~\cite[Thm 4]{campi2023compression}. It is with the forthcoming definition of $p_{k,m}$ that the two depart. The authors of \cite{campi2023compression} introduce, in place of a scalar, a suitable measure, derive conditions that these measures must satisfy, and formulate an infinite-dimensional linear program whose value is evaluated by truncation and dualization. Instead, we show that the quantities $p_{k,m}$ satisfy two simple relations on a triangular array, and that these are sufficient to \mbox{derive the same bound directly.}}
\medskip

Now introduce $p_{k,m}$ for any $0\le k\le m \le N$ as 

\[p_{k,m} := \mathbb{P}\{c(\bm{D}_m)=\bm{D}_k~\text{and}~\phi({\bm{D}_m}) >\varepsilon(k, N, \delta)\}. \] 

Each quantity $p_{k,m}\in[0,1]$ represents the probability that compressing $\bm{D}_m$ yields $\bm{D}_k$ and that jointly $\phi(\bm{D}_m) > \varepsilon(k, N, \delta)$. The quantities 
\[\{p_{k,m}~:~0\le k\le m \le N\}\] 
form a triangular array, visualized in \Cref{fig:triangular-array}, and will play a central role in what follows. 

\begin{figure}[t!]
\centering
\begin{tikzpicture}
\matrix (M) [
    matrix of math nodes,
    row sep=0.0em,
    column sep=0.0em,
    nodes={anchor=center, inner sep=3pt, minimum width=2.8em, minimum height=2.4em},
] {
    p_{0,N} & \cdots  & p_{k,N}  & \cdots  & \cdots   & \cdots   & p_{N,N} \\
    \vdots  &         & \vdots   &         &          & \iddots  &         \\
    p_{0,m} & \cdots  & p_{k,m}  & \cdots  & p_{m,m}  &          &         \\
    \vdots  &         & \vdots   & \iddots &          &          &         \\
    \vdots  &         & p_{k,k}  &         &          &          &         \\
    \vdots  & \iddots &          &         &          &          &         \\
    p_{0,0} &         &          &         &          &          &         \\
};

\begin{scope}[on background layer]
    \node[fill=blue!20, fill opacity=0.5, rounded corners=3pt,
          inner sep=0pt,
          fit=(M-3-1)(M-3-5)] {};
\end{scope}

\begin{scope}[on background layer]
    \node[fill=red!20, fill opacity=0.5, rounded corners=3pt,
          inner sep=-2pt,
          fit=(M-1-3)(M-5-3)] {};
\end{scope}

\node[anchor=west, font=\small, text=blue]
    at ([yshift=-2.5em, xshift=0.8em]M-3-5.east)
    {$\displaystyle\sum_{k=0}^{m}\tbinom{m}{k}p_{k,m} \leq 1$};

\node[anchor=north, font=\small, text=red]
    at ([yshift=-0.6em, xshift=6.5em]M-5-3.south)
    {$p_{k,m+1} \leq (1-\varepsilon_k)\,p_{k,m}$};
\end{tikzpicture}
\caption{Visualization of the triangular array $\{p_{k,m} : 0 \leq k \leq m \leq N\}$. The first relation, reported in blue, controls the weighted sum along each row; the second relation, reported in red, links adjacent entries within each column. Thanks to \eqref{eq:p-change-comp-equal-p-terms}, it suffices to bound $\sum_{k=0}^{N-1} \binom{N}{k} p_{k,N}$, which is a linear combination of the terms in the top row of the triangular array.}
\label{fig:triangular-array}
\end{figure}

We can thus write 
\beq 
\mathbb{P}\{ \phi(\bm{D}_N) > \varepsilon(|c(\bm{D}_N)|, N, \delta)\}= \sum_{k=0}^{N-1} \binom{N}{k} p_{k,N}, 
\label{eq:p-change-comp-equal-p-terms} 
\eeq

\noindent where the $k=N$ term vanishes since $\varepsilon(N,N,\delta)=1$ and $\phi(\bm{D}_N)\le 1$ (being a conditional probability), so the event $\{\phi(\bm{D}_N) > \varepsilon(N,N,\delta)\}$ has probability zero.

\medskip
Two key relations govern the triangular array.

\noindent\textbf{First relation.} For any $0\le m \le N$, 
\[ \sum_{k=0}^{m} \binom{m}{k} p_{k,m} \leq 1\,\,. \] 
This holds because, applying the same \red{disjointness} and symmetry argument as above to $\bm{D}_m$ in place of $\bm{D}_N$, the left hand side equals $\mathbb{P}\{\phi(\bm{D}_m) > \varepsilon(|c(\bm{D}_m)|,N,\delta)\}$, which -- being a probability -- is at most one. This relation links together entries within the same row $m$ of the triangular array, see \Cref{fig:triangular-array}.

\medskip
\noindent\textbf{Second relation.} By the preference property,
\[
p_{k,m+1} \leq (1-\varepsilon_k)\, p_{k,m} 
\qquad \text{for all } 0 \leq k \leq m \leq N-1,
\]
where $\varepsilon_k := \varepsilon(k,N,\delta)$. This links 
together entries \emph{within the same column} of the array, see \Cref{fig:triangular-array}. Its proof is deferred to the Appendix.

\medskip
\noindent\textbf{Proof of the bound.} Thanks to \eqref{eq:p-change-comp-equal-p-terms}, it suffices to show that $\sum_{k=0}^{N-1}\binom{N}{k}p_{k,N}\leq \delta$. Note that the quantity to bound consists of a linear combination of the terms appearing in the top row of the triangular array in \Cref{fig:triangular-array}. We do so employing the two relations just introduced. The second relation allows us to descend from level $N$ to any lower level $m < N$ in the triangular array, that is, to bound $p_{k,N}$ with
\[
p_{k,N} \leq (1-\varepsilon_k)^{N-m} p_{k,m}\,\,.
\]

\noindent 
Rather than choosing to bound $p_{k,N}$ using a single level $m$, we  combine \emph{all} levels simultaneously. Specifically, for each column $k$, introduce weights $\alpha_{k,m} \geq 0$ with $\sum_{m=k}^{N-1} \alpha_{k,m} = 1$, and combine the above inequalities
\[
p_{k,N} \leq \sum_{m=k}^{N-1}(1-\varepsilon_k)^{N-m}\alpha_{k,m} \, p_{k,m}\,\,.
\]

\noindent
Substituting into \eqref{eq:p-change-comp-equal-p-terms} and swapping the order of summation:
\[
\mathbb{P}\{ \phi(\bm{D}_N) > \varepsilon(|c(\bm{D}_N)|, N, \delta)\}= 
\sum_{k=0}^{N-1}\binom{N}{k} p_{k,N} 
\leq 
\sum_{m=0}^{N-1}\sum_{k=0}^{m}
{\binom{N}{k}(1-\varepsilon_k)^{N-m}\alpha_{k,m}}
\, p_{k,m}\,\,.
\]

We now wish to apply the first relation, by which  $\sum_{k=0}^{m}\binom{m}{k}p_{k,m}\leq 1$ along each row $m$ of the triangular array. In order to do so, we impose
\[
\binom{N}{k}
(1-\varepsilon_k)^{N-m}\alpha_{k,m} 
=
\binom{m}{k}
\delta_m
\]
for some given $\delta_m$ independent of $k$. This constraint implies that $\alpha_{k,m}$ satisfies
\beq
\alpha_{k,m} = \delta_m\frac{\binom{m}{k}}{\binom{N}{k}}
(1-\varepsilon_k)^{m-N}\,\,.
\label{eq:alpha-k-m}
\eeq
At the same time, $\sum_{m=k}^{N-1} \alpha_{k,m} = 1$ imposes a consistency condition on $\varepsilon_k$. Indeed, substituting the expression for $\alpha_{k,m}$ from \eqref{eq:alpha-k-m} into $\sum_{m=k}^{N-1} \alpha_{k,m} = 1$ and simplifying gives

\[
\frac{\binom{N}{k}(1-\varepsilon_k)^{N-k}}
{\sum_{m=k}^{N-1}\binom{m}{k}(1-\varepsilon_k)^{m-k}} = 
\delta_m \,\,,
\]
which is precisely the calibration equation~\eqref{eq:calibration} upon 
setting $\delta_m = \frac{\delta}{N}$. With this choice, 
the result follows
\[
\mathbb{P}\{ \phi(\bm{D}_N) > \varepsilon(|c(\bm{D}_N)|, N, \delta)\} 
\leq 
\sum_{m=0}^{N-1}\delta_m \sum_{k=0}^{m}
\binom{m}{k}p_{k,m}
\leq\sum_{m=0}^{N-1} \delta_m = \delta\,\,.
\]

\subsection{Proof of \Cref{thm:tight}}
We structure the proof in three parts:

\begin{enumerate}

\item[(i)] We show that $c_\eta$ satisfies the preference property for any choice of $\red{\eta \ge 0}$.

\item[(ii)] We consider the probability of change of compression, $\phi_\eta(D)$, associated to $c_\eta$ and the chosen distribution $\mb{P}$.  We then show that $\mb{P}\Big\{ \phi_\eta(\bm{D}) >  \varepsilon(|c_\eta(\bm{D})|, N, \delta)\Big\}$ converges to a limiting value as $\eta\rightarrow 0^+$ by appealing to dominated convergence.

\item[(iii)] We evaluate the limit directly and show it \emph{equals} $\delta$. We do so by showing that the \emph{first} and \emph{second relations} appearing in the proof of \Cref{prop:CG} hold with \emph{equality} as $\eta \to 0^+$. This is the heart of the proof, and the key reason why we provide our own proof of \Cref{prop:CG}. Thus, the same argument as in the proof of \Cref{prop:CG}, applied now with equalities in place of inequalities, yields equality throughout, so that for the complement event $$\mathbb{P}\Big\{\phi_\eta(\bm{D}) \leq \varepsilon(|c_\eta(\bm{D})|, 
N, \delta)\Big\} \to 1-\delta\,\,.$$

\end{enumerate}

\medskip
\noindent \textbf{Part~(i).} Let $U$ be a finite multiset, with order statistics $u_{(1)} \leq \cdots \leq u_{(|U|)}$, breaking ties randomly. Let $V$ be a multiset satisfying $c_\eta(U) \subseteq V \subseteq U$. Write $v_{(1)} \leq \cdots \leq v_{(|V|)}$ for the order statistics of $V$, again breaking ties randomly. We will show that $c_\eta(V) = c_\eta(U)$.
\medskip

To this end, denote $k^* = |c_\eta(U)|$. Observe that, by construction, $v_{(j)} = u_{(j)}$ for $j = 1, \dots, k^*$, and that $v_{(k^*+1)} \geq u_{(k^*+1)}$ since all elements in $V\setminus c_\eta(U)$ are no-smaller than $u_{(k^*+1)}$. 
\medskip

We now consider three cases for $k^*$:

\begin{enumerate}

\item[(i)] Case of $k^*=|V|$. Since $v_{(j)} = u_{(j)}$ for $j = 1, \dots, k^*$ and $u_{(j)}\le t_{j-1}$ for all $j = 1, \dots, k^*$, it must be $v_{(j)}\le t_{j-1}$ for all $j = 1, \dots, k^*$. Thus  $$c_\eta(V) = \{v_{(1)}, \dots, v_{(k^*)}\}=c_\eta(U)\,\, .$$

\item[(ii)] Case of $k^*<|V|$ and $k^*=N$. Because $v_{(j)} = u_{(j)}$ for $j = 1, \dots, k^*$ and $u_{(j)}\le t_{j-1}$ for all $j = 1, \dots, k^*$, it follows that $v_{(j)}\le t_{j-1}$ for all $j = 1, \dots, k^*$. Hence,  $$c_\eta(V) = \{v_{(1)}, \dots, v_{(k^*)}\}=c_\eta(U)\,\, .$$

\item[(iii)] Case of $k^*<|V|$ and $k^*<N$. It follows that $u_{(j)} \le  t_{j-1}$ for $j=1,\dots,k^*$,  while $u_{(k^*+1)} > t_{k^*}$. Since $v_{(j)} = u_{(j)}$ for $j = 1, \dots, k^*$ and $v_{(k^*+1)} \geq u_{(k^*+1)} > t_{k^*}$, we have $$c_\eta(V) =\{v_{(1)},\dots,v_{(k^*)}\}=\{u_{(1)},\dots,u_{(k^*)}\} =c_\eta(U)\,\, .$$

\end{enumerate}
Thus we conclude that, for any choice of \red{$\eta\ge 0$}, $c_\eta$ is a preferent compression.
\medskip

\noindent \textbf{Part~(ii).} 
We now consider the probability of change of compression associated with the dataset $\bf{D}$ obtained by sampling $N$ i.i.d. random variables uniformly in $[0,\,1]$ and with the compression $c_\eta$ 
$$
\phi_\eta(D) = \mb{P}\big\{c_\eta(c_\eta(D) \cup \{\bm{z}\}) \neq c_\eta(D) \;|\; D\big\} \, \, . 
$$
We aim to exactly quantify 
$$
F(\eta):= \pp(\phi_\eta(\bm{D}) > \varepsilon(|c_\eta(\bm{D})|, \, N,\, \delta) )\, \, .
$$
Observe that since $\mb{P}$ is the uniform distribution on $[0,\,1]$, the probability of encountering two or more identical values in the realization $D$ is zero. As such, $F(\eta)$ can be computed by considering only realizations for which no ties appear. 
\medskip

\red{
For notational convenience, denote $\varepsilon_k=\varepsilon(k,N,\delta)$ for $k=0,\dots,N$, and recall the thresholds $t_k=\varepsilon_k+\eta$ of Construction~\ref{con:main}. We will use that $t_k$ is non-decreasing in $k$, which follows from the monotonicity of $\varepsilon_k$.\footnote{\red{To see this, let $x:=1-\varepsilon$, so that \eqref{eq:calibration} reads
$h_k(x)=\delta/N$, where
\[
h_k(x)
=
\left (\sum_{m=k}^{N-1}
\frac{\binom{m}{k}}{\binom{N}{k}}x^{m-N}
\right)^{-1}.
\]
It is immediate to see that each $h_k$ is strictly increasing and that $h_{k+1}(x)>h_k(x)$ for all $x\in(0,1)$.  Since $h_k(1-\varepsilon_k)=\delta/N$ and $h_{k+1}(1-\varepsilon_{k+1})=\delta/N$, we have $1-\varepsilon_{k+1}<1-\varepsilon_k$, and therefore $\varepsilon_{k+1}>\varepsilon_k$.
}}

If $|c_\eta(D)|=N$, then $\phi_\eta(D)\leq\varepsilon_N=1$. Suppose instead that $k^*_\eta:=|c_\eta(D)|\leq N-1$. We claim that
\[ 
c_\eta(c_\eta(D)\cup\{z\})\neq c_\eta(D)
\quad\Longleftrightarrow\quad
z\leq t_{k^*_\eta}.
\]
Indeed, as $\mb{P}$ is uniform, $z\notin c_\eta(D)$ almost surely, and $c_\eta(D)\cup\{z\}$ has $k^*_\eta+1$ elements, so $z$ is inserted at some position $j\leq k^*_\eta+1$. 

Suppose $z\leq t_{k^*_\eta}$. The elements preceding $z$ are $u_{(1)},\dots,u_{(j-1)}$, which pass their tests by Construction~\ref{con:main}. 
The element $z$ also passes its test. Indeed, if $j\leq k^*_\eta$, then
$
z<u_{(j)}\leq t_{j-1},
$
while if $j=k^*_\eta+1$, then $z\leq t_{k^*_\eta}=t_{j-1}$.
Hence $z$ belongs to $c_\eta(c_\eta(D)\cup\{z\})$ but not to $c_\eta(D)$, and the compression
changes. 

Conversely, if $z>t_{k^*_\eta}$, monotonicity gives $u_{(k^*_\eta)}\leq t_{k^*_\eta-1}\leq t_{k^*_\eta}<z$ (with the first inequality omitted when $k^*_\eta=0$), so $z$ is appended after all compressed elements: the first $k^*_\eta$ tests are unaffected and the test at position $k^*_\eta+1$ fails. Thus $c_\eta(c_\eta(D)\cup\{z\})=c_\eta(D)$ and the compression remains unchanged.}
Therefore, 
\[
\begin{aligned}
\phi_\eta(D)
&= \pp( \bm{z} \leq  \varepsilon_{{k^*_\eta}} + \eta \, |\, D) = \min\{\varepsilon_{k^*_\eta} + \eta, 1\}\, \, ,
\end{aligned}
\]
where the first equality follows from the above claim and the definition of $\phi_\eta$, and the second since $\bm{z}\sim\mb{P}$ is uniformly distributed on $[0,1]$.

\noindent In particular, it follows that  $\phi_\eta(D) = \min\{\varepsilon_{k^*_\eta} + \eta, 1\} > \varepsilon_{k^*_\eta}$, for any choice of $\eta >0$ and $k^*_\eta <N$. We may therefore conclude that, for all $\eta>0$
\[
F(\eta)=\pp(\phi_\eta(\bm{D}) > \varepsilon(|c_\eta(\bm{D})|, \, N,\, \delta) ) = \pp(k^*_\eta < N) \, \, .
\]

\medskip
\noindent 
We now proceed to evaluate $\pp( \bm{k}^*_\eta < N)$:

\beq 
\begin{aligned}
\pp( \bm{k}^*_\eta < N) &= 1 - \pp(\bm{k}^*_\eta = N) \\
&= 1- \pp(\bm{z}_{(i)} \leq \varepsilon_{i-1} + \eta \text{ for all } i =1,\dots,N)  \\
&= 1- \ee \left[ \prod_{i=1} ^N I\{ \bm{z}_{(i)} \leq \varepsilon_{i-1} + \eta \} \right] \, \, .
\end{aligned}
\label{F.eta.defn}
\eeq

\noindent Note that, as $\eta \to 0^+$, the integrand converges; indeed,

$$
\prod_{i=1} ^N I\{ \bm{z}_{(i)} \leq \varepsilon_{i-1} + \eta \}  \to \prod_{i=1} ^N I\{ \bm{z}_{(i)} \leq \varepsilon_{i-1} \} \, \,.
$$

\noindent Since the integrand is bounded, it follows that, by dominated convergence, as $\eta \to 0^+$,
$$
F(\eta) = \pp( \bm{k}^*_\eta < N)
\to 
\pp( \bm{k}^*_0 < N)  %
\, \, . 
$$ 

\noindent Therefore, to evaluate the limit of $F(\eta)$ as $\eta \to 0^+$, it suffices to evaluate $\pp( \bm{k}^*_\eta < N)$ when $\eta=0.$
\begin{remark}
This does \emph{not} mean that the proof could have been carried out by forgoing the sequence of compressions $c_\eta$ and by focusing on $c_0$ only. In fact $c_0$ is such that $\phi_0(D)=\varepsilon_{k^*}$ when $k^*\le N-1$, which implies $F(0)=0$. Thus, $c_0$ is such that  
$\mb{P}\Big\{ \phi_0(\bm{D}) \leq  \varepsilon(|c_0(\bm{D})|, N, \delta)\Big\}=1$, which does \emph{not} establish tightness of the bound, as instead required.    
\end{remark}

\medskip

\noindent \textbf{Part~(iii).} We now proceed to evaluate $\pp( \bm{k}^*_0 < N)$. To this end, consider $m \leq N$, and let $D_m = \{z_1, \, \dots,\, z_m\}$, with $D_0=\emptyset$. For each $0 \leq k \leq m \leq N$, define
\[
v_{k, \, m} := \pp( c_0(\bm{D}_m) = \bm{D}_k) \,\,.
\]
This quantity represents the probability that compressing the dataset $D_m=\{z_1,\dots,z_m\}$ we obtain a dataset containing precisely its first $k$ samples, that is $D_k=\{z_1,\dots,z_k\}$. We can therefore utilize $v_{k,m}$ to decompose $\pp( \bm{k}^*_0 < N)$:
\beq
\begin{split}
\pp( \bm{k}^*_0 < N)
&= \sum_{k = 0} ^{N-1} \pp(|c_0(\bm{D}_N)|=k)\\
&= \sum_{k = 0} ^{N-1} \binom{N}{k} \pp( c_0(\bm{D}_N) = \bm{D}_k)\\
&= \sum_{k = 0} ^{N-1} \binom{N}{k} v_{k,\, N}\,\,,
\end{split}
\label{eq:p-k0-less-N}
\eeq
where the second line holds as there are $\binom{N}{k}$ subsets with $k$ elements and the probability that $c_0$ produces each such set of size $k$ equals the probability of producing $D_k$, due to the i.i.d. assumption and the permutation \mbox{invariance of  $c_0$.}

\medskip
We now show that the quantities $\{v_{k,m} : 0 \leq k \leq m \leq N\}$ satisfy two key relations, which mirror those governing $p_{k,m}$ in the proof of \Cref{prop:CG}, but now with equality. 

\medskip 
\noindent\textbf{First relation (with equality).} For any $0 \leq m \leq N$, $$\sum_{k=0}^{m}\binom{m}{k}v_{k,m} = 1\,\,.$$ This holds because the left hand side equals $\pp(\bm{k}^*_0 \leq m)$, and $\pp(\bm{k}^*_0 \leq m)=1$ since the compression of any dataset of size $m$ has cardinality at most $m$. 

\medskip \noindent\textbf{Second relation (with equality).} For all $0 \leq k \leq m \leq N-1$, $$v_{k,m+1} = (1-\varepsilon_k)\,v_{k,m}\,\,.$$ 
\red{
This holds because $c_0$ is preferent by Part~(i). In particular, since $\bm{D}_k\subseteq\bm{D}_m\subseteq\bm{D}_{m+1}$, the same argument as in Step~1 of the Appendix gives $\{c_0(\bm{D}_{m+1})=\bm{D}_k\}
=
\{c_0(\bm{D}_m)=\bm{D}_k\}
\cap
\{c_0(\bm{D}_{m+1})=c_0(\bm{D}_m)\}$.
Moreover, on the event $\{c_0(\bm{D}_m)=\bm{D}_k\}$, the argument in Part~(ii), applied with $\eta=0$, shows that $c_0(\bm{D}_{m+1})=c_0(\bm{D}_m)$ if and only if $\bm{z}_{m+1}>\varepsilon_k$, recalling that $t_k=\varepsilon_k$ when $\eta=0$. Since $\bm{z}_{m+1}$ is independent of $\bm{z}_1,\dots,\bm{z}_m$ and uniformly distributed on $[0,1]$, this occurs with probability $1-\varepsilon_k$. Hence,  $v_{k,m+1}=(1-\varepsilon_k)v_{k,m}$. Applying this recursion $N-m$ times yields
\beq
\label{eq:second-relation-equality-N-m}
v_{k,N} = (1-\varepsilon_k)^{N-m}v_{k,m}\,\,.
\eeq}

We can now conclude the proof by applying an identical argument and choice of $\alpha_{k,m}$ as in the proof of \Cref{prop:CG}. Specifically, choose 
\[ 
\alpha_{k,m} = \frac{\delta}{N}\cdot\frac{\binom{m}{k}}{\binom{N}{k}}(1-\varepsilon_k)^{m-N}\,\,, 
\] 
and note that, thanks to the calibration equation defining $\varepsilon_k$ in~\eqref{eq:calibration}, 
\[
\sum_{m=k}^{N-1}\alpha_{k,m} = \frac{\delta}{N}\cdot\frac{1}{\binom{N}{k}}\sum_{m=k}^{N-1}\binom{m}{k}(1-\varepsilon_k)^{m-N} = 1\,\,. 
\]

\noindent Therefore, taking linear combinations of \eqref{eq:second-relation-equality-N-m} with coefficients $\alpha_{k,m}$ yields
\beq
\label{eq:v-k-n-final}
v_{k,N} 
= \sum_{m=k}^{N-1} \alpha_{k,m} (1-\varepsilon_k)^{N-m}v_{k,m}
= \frac{\delta}{N}\sum_{m=k}^{N-1} \frac{\binom{m}{k}}{\binom{N}{k}} v_{k,m}\,\,.
\eeq
Substituting into \eqref{eq:p-k0-less-N} gives
\[ 
\begin{split}
\pp(\bm{k}^*_0 < N) 
&= \sum_{k=0}^{N-1}\binom{N}{k}v_{k,N}\\
&= \frac{\delta}{N}\sum_{k=0}^{N-1}\sum_{m=k}^{N-1}\binom{m}{k}v_{k,m}\\ 
&= \frac{\delta}{N}\sum_{m=0}^{N-1}\sum_{k=0}^{m}\binom{m}{k}v_{k,m}\\ 
&= \frac{\delta}{N}\sum_{m=0}^{N-1}1 = \delta\,\,, \end{split} \]
where we have used \eqref{eq:v-k-n-final} in the second line, swapped order of summation in the third line, and used the first relation in the fourth line. This concludes the proof.

\begin{ack}
Dario Paccagnan was partially supported by the EPSRC grant
EP/Y001001/1, funded by the International Science Partnerships Fund
(ISPF) and UKRI, and by an Imperial--MIT seed fund.
\end{ack}

\bibliographystyle{plainnat}
{\bibliography{references}}

\appendix 
\label{sec:appendix}
\section{Proof of second relationship}
\noindent
We aim to prove that $p_{k,m+1}\le(1-\varepsilon_k) p_{k,m}$. We divide the proof in three steps.
\medskip

\noindent
\textbf{Step 1.} Recall that 
\[
p_{k,m+1}   \doteq \mathbb P\{c({\bf D}_{m+1})={\bf D}_k~\text{and}~\phi({\bf D}_{m+1})> \varepsilon_k\}\,\,.
\]
Observe that, by preference, if dataset $D_{m+1}$ is such that $c(D_{m+1})=D_k$, $k\le m$, then it must be $c(D_{m+1})=c(D_{m})$ as $D_k\subseteq D_{m} \subseteq D_{m+1}$. Thus, $c({{D}}_{m+1})={{D}}_k$ holds if and only if $c({{D}}_{m+1})=c({{D}}_{m})~\text{and}~c({{D}}_{m})={{D}}_k$. Additionally $\phi(D_{m+1})=\phi(D_m)$ since one can replace $c(D_{m+1})$ with $c(D_m)$ in the definition of $\phi(D_{m+1})$. 
Thus,
\[
\begin{split}
p_{k,m+1}   =\mathbb P\{c({\bf D}_{m+1}&)= c({\bf D}_{m})~\text{and}~\\
&c({\bf D}_{m})={\bf D}_k~\text{and}~\phi({\bf D}_{m})> \varepsilon_k\}\,\,.
\end{split}
\]

\noindent \textbf{Step 2.} Note that the right-hand side above is the probability of the intersection of two events:
\begin{itemize}[leftmargin=8mm]
\item[$(A)$:] $c({\bf D}_{m+1}) = c({\bf D}_m)$, depending on $D_m$ and  $z_{m+1}$,
\item[$(B)$:] $c({\bf D}_m) = {\bf D}_k$ and $\phi({\bf D}_m) > \varepsilon_k$, depending on $D_m$.
\end{itemize}
\medskip
\noindent 
By tower property of the expectation:
\[
p_{k,m+1} = \mathbb{E}\big[\,\bm{1}_A \cdot \bm{1}_B\,\big] = \mathbb{E}\big[\,\mathbb{E}[\,\bm{1}_A \cdot \bm{1}_B \mid {\bf D}_m\,]\,\big]\,\,.
\]
Since $\bm{1}_B$ is fully determined by $\bm{D}_m$, it can be taken outside the inner expectation (which is taken over $\bm{z}_{m+1}$), giving
\beq
\label{eq:tower-prop}
p_{k,m+1} = \mathbb{E}\big[\,\mathbb{P}(A \mid {\bf D}_m) \cdot \bm{1}_B\,\big]\,\,.
\eeq
In the following step, we will show that for any fixed $D_m$ satisfying $B$, it is $\mathbb{P}(A \mid {D}_m)  < 1-\varepsilon_k$. Applying this to \eqref{eq:tower-prop} will complete the proof since it gives
\[
p_{k,m+1} 
 \leq (1-\varepsilon_k) \cdot \mathbb{E}\big[\bm{1}_B\big]
 = (1-\varepsilon_k) p_{k,m}\,\,.
\]

\noindent
\textbf{Step 3.} We now show that for any fixed $D_m$ satisfying $B$
\[
\mathbb{P}(A \mid {D}_m)\le 1-\phi(D_m) < 1-\varepsilon_k\,\,.
\]
First, note that the last inequality follows trivially by definition of $B$, since $1-\phi(D_m) < 1- \varepsilon_k$. Thus it remains to show the former. To see this, fix $D_m$, and note that %
\beq
\{c(D_m \cup \{z_{m+1}\}) = c(D_m)\} \subseteq
\{c(c(D_m) \cup \{z_{m+1}\}) = c(D_m)\}\,\,.
\label{eq:set-containment}
\eeq
Indeed, take any $z_{m+1}$ in the left-hand set, so that $c(D_m \cup \{z_{m+1}\}) = c(D_m)$. By preference, since $c(D_m) \cup \{z_{m+1}\}$ is contained between its superset $D_m \cup \{z_{m+1}\}$, as $c(D_m) \subseteq D_m$, and its subset $c(D_m \cup \{z_{m+1}\}) = c(D_m)$, it must be $c(c(D_m) \cup \{z_{m+1}\}) = c(D_m)$, which places $z_{m+1}$ in the right-hand set. Taking probabilities conditional on $D_m$ in \eqref{eq:set-containment}, and recalling the definition of $\mathbb{P}(A \mid {D}_m)$ and $\phi(D_m)$ gives the desired result. 

\blue{
\section{Removing the assumption of non-concentrated mass}
We now remove the assumption of non-concentrated mass appearing in \eqref{eq:non-concentrated-mass} and used in \Cref{subsec:simplifiedproof}. For this purpose we follow the same argument of~\cite[page 42]{campi2023compression}, which we report here with our notation purely for completeness.
\medskip

To that end, we augment each random element $\bm{z}_i$ with a random variable
$\bm{\theta}_i$ uniformly distributed over $[0,1]$ and independent of
$\bm{z}_i$, so as to form an i.i.d. sequence of augmented samples
$\bm{z}'_1 = (\bm{z}_1,\bm{\theta}_1), \bm{z}'_2 = (\bm{z}_2,\bm{\theta}_2), \dots$
taking values in $\mc{Z}' = \mc{Z}\times[0,1]$. %
Since $\bm{\theta}_i$ is uniform and independent of $\bm{z}_i$, the augmented
samples are non-concentrated by construction, so that
\beq
\label{eq:no-mass-augmented}
\mb{P}\big\{\bm{z}'_i = z'\big\} = 0\,, \quad \forall z' \in \mc{Z}'\,\,.
\eeq

Given a multiset $U'$ of augmented samples, let $\pi[U']$ denote the multiset obtained by extracting the first components. We define a compression $c'$, to be applied to multisets of augmented samples, as the compression for which $\pi[c'(U')] = c(\pi[U'])$ and, among sub-multisets of $U'$ whose projection is $c(\pi[U'])$, $c'$ selects augmented samples with lowest second component.
\medskip

{We next show that $c'$ also satisfies the preference property.  Let $T' = c'(U')$ and let $V'$ be such that $T' \subseteq V' \subseteq U'$. Projecting, $c(\pi[U']) = \pi[T'] \subseteq \pi[V'] \subseteq \pi[U']$, so preference of $c$ gives $c(\pi[V']) = \pi[T']$. Fix now $z \in \mc{Z}$ and let $j$ be its multiplicity in $\pi[T']$. By definition of $c'(U')$, the $j$ augmented samples in $T'$ with first component $z$ are those with the $j$ lowest second components among all such samples in $U'$. Since $T' \subseteq V' \subseteq U'$, they are also those with the $j$ lowest second components in $V'$. Hence $c'(V') = T' = c'(U')$, thus establishing the preference property of $c'$.}

\medskip

We are thus in a position to apply to $c'$ the proof that has been
developed before under the assumption of non-concentrated mass. Indeed, \eqref{eq:no-mass-augmented} holds by construction, while $c'$ has just been shown to be preferent. Letting
$\phi'(\bm{D}') := \mb{P}\{c'(c'(\bm{D}') \cup \{\bm{z}'\}) \neq c'(\bm{D}') \mid \bm{D}'\}$, where $\bm{D}' := \{\bm{z}'_1,\dots,\bm{z}'_N\}$, we have
\[
\mb{P}\big\{\phi'(\bm{D}') > \varepsilon(|c'(\bm{D}')|,N,\delta)\big\} \leq \delta\,\,.
\]
However, for any $D'$ with $\pi[D'] = D$ and any $z' = (z,\theta)$, $c(c(D)\cup\{z\}) \neq c(D)$ implies that
$
c'(c'(D') \cup \{z'\}) \neq c'(D')
$, which gives $\phi(\bm{D}) \leq \phi'(\bm{D}')$ almost surely. Moreover, $|c(\bm{D})| = |c'(\bm{D}')|$. We thus obtain the desired result
\[
\mb{P}\big\{\phi(\bm{D}) > \varepsilon(|c(\bm{D})|,N,\delta)\big\}
\leq
\mb{P}\big\{\phi'(\bm{D}') > \varepsilon(|c'(\bm{D}')|,N,\delta)\big\}
\leq \delta\,\,.
\]
}

\end{document}